\def\BState{\State\hskip-\ALG@thistlm}
\let\OldStatex\Statex
\renewcommand{\Statex}[1][3]{%
  \setlength\@tempdima{\algorithmicindent}%
  \OldStatex\hskip\dimexpr#1\@tempdima\relax}
\newtheorem{theorem}{Theorem}
\newtheorem{definition}{Definition}
\newcommand{\state}{x}
\newcommand{\action}{u}
\newcommand{\horizon}{T}
\newcommand{\strategy}{U}
\begin{document}

\title{Potential iLQR: A Potential-Minimizing Controller for Planning Multi-Agent Interactive Trajectories}



\author{\authorblockN{Talha Kavuncu}
\authorblockA{Aerospace Engineering\\
UIUC\\
kavuncu2@illinois.edu}
\and
\authorblockN{Ayberk Yaraneri}
\authorblockA{Aerospace Engineering\\
UIUC\\
ayberky2@illinois.edu}
\and
\authorblockN{Negar Mehr}
\authorblockA{Aerospace Engineering\\
UIUC\\
negar@illinois.edu}}

\maketitle

\begin{abstract}
Many robotic applications involve interactions between multiple agents where an agent's decisions affect the behavior of other agents. Such behaviors can be captured by the equilibria of differential games which provide an expressive framework for modeling the agents' mutual influence. However, finding the equilibria of differential games is in general challenging as it involves solving a set of coupled optimal control problems. In this work, we propose to leverage the special structure of multi-agent interactions to generate interactive trajectories by simply solving a single optimal control problem, namely, the optimal control problem associated with minimizing the potential function of the differential game. Our key insight is that for a certain class of multi-agent interactions, the underlying differential game is indeed a potential differential game for which equilibria can be found by solving a single optimal control problem. We introduce such an optimal control problem and build on single-agent trajectory optimization methods to develop a computationally tractable and scalable algorithm for planning multi-agent interactive trajectories. We will demonstrate the performance of our algorithm in simulation and show that our algorithm outperforms the state-of-the-art game solvers. To further show the real-time capabilities of our algorithm, we will demonstrate the application of our proposed algorithm in a set of experiments involving interactive trajectories for two quadcopters.

\end{abstract}

\IEEEpeerreviewmaketitle


\section{Introduction}\label{sec:intro}
Many robotic applications involve interactions between multiple agents. For instance, two quadcopters may need to interact and implicitly coordinate to successfully navigate in a shared three-dimensional space (Fig.~\ref{fig:vehicles_front}). An autonomous car may need to interact with human-driven cars or pedestrians to navigate an intersection. Planning in such interactive settings is in general challenging due to the feedback interactions between the agents. An agent's state and action will affect the state and actions of the other agents, i.e., agents are coupled by their intentions and actions. In this work, we demonstrate that by leveraging the structure that is inherent in such interactive settings, we can resolve these couplings, and agents can plan interactive trajectories by solving an optimal control problem. 


\begin{figure}
\includegraphics[width=1\linewidth]{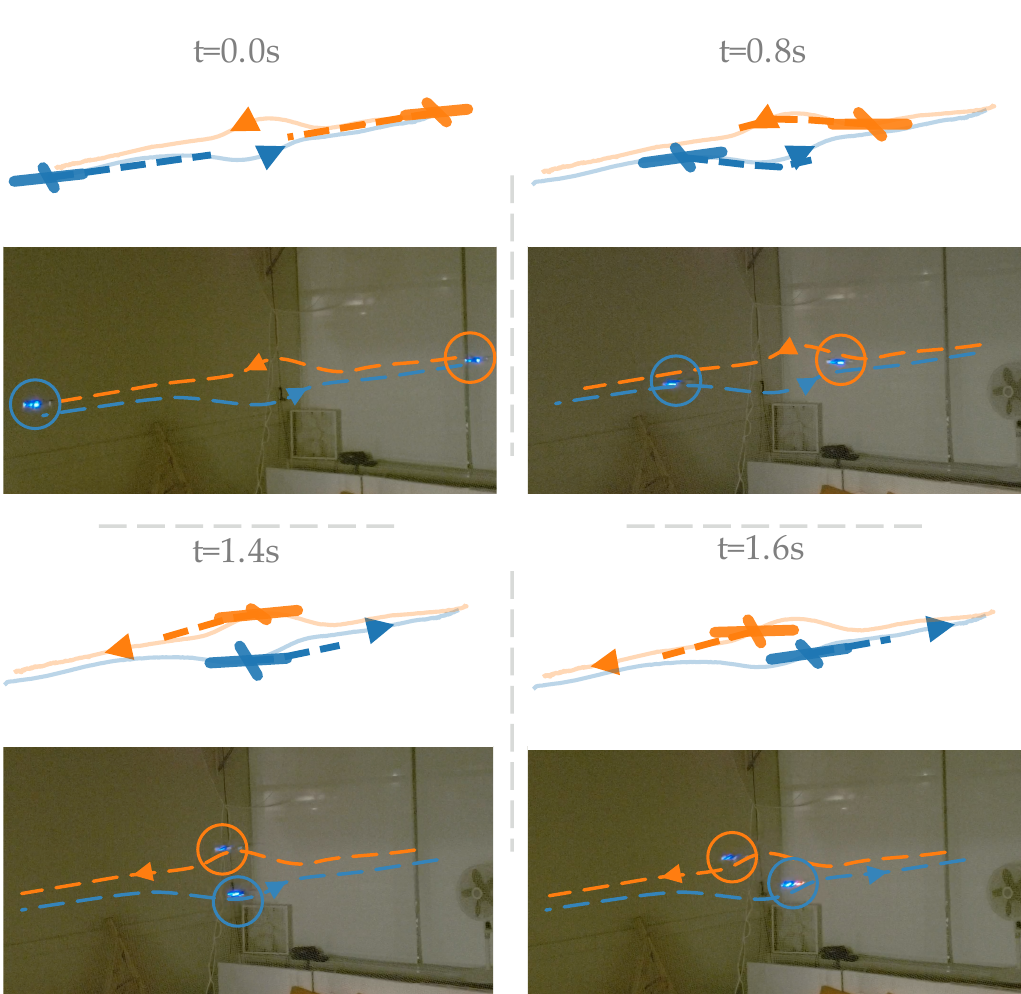}
\caption{Demonstration of our interactive trajectory planning algorithm in an experiment involving two quadcopters. The two quadcopters start at roughly the same altitude and switch their positions such that the starting position of one becomes the goal position of the other. The quadcopters exhibit intuitive interactive trajectories and change altitudes for avoiding collisions with each other.}
\label{fig:vehicles_front}
\end{figure}



It has been shown that feedback interactions in interactive settings can be captured by differential games~\cite{sadigh2016planning,sadigh2016information}. Every agent is a utility maximizer seeking to maximize their own utility over a horizon of time while an agent's utility can depend on the state and actions of all the agents. In such settings, the mutual influence of the agents as well as the outcome of the interaction is best represented by an equilibrium of the underlying differential game. However, 
finding the equilibrium strategies of such games is in general challenging as it involves solving a set of coupled optimal control problems. Consequently, most differential games do not have analytical solutions, and even their numerical solutions are not scalable~\cite{fabrikant2004complexity,bacsar1998dynamic}.



In this work, we propose to leverage the special structure of multi-agent interactions to generate interactive trajectories by simply solving a \emph{single} optimal control problem. 
Our key insight is that for a certain class of multi-agent interactions, the underlying differential game is indeed a potential differential game. Potential games are a class of games for which a Nash equilibrium always exists, and the Nash equilibrium can be found by solving a single optimization problem~\cite{monderer1996potential}. Thus, we can employ a standard single-agent trajectory optimization method such as iLQR~\cite{li2004iterative} for planning multi-agent interactive trajectories. 

We will prove that a class of multi-agent interactions, namely, interactions where the mutual couplings between the agents are symmetric, are indeed potential differential games. For such games, we introduce the optimal control problem associated with minimizing the potential of the game whose solution is the equilibrium trajectory of the original interaction game. Using this result, we develop a computationally tractable algorithm for interactive trajectory planning. 
We will compare the performance of our algorithm with the state-of-the-art in interactive trajectory planning both in terms of the quality of the trajectories as well as the computational tractability. To further show the real-time capabilities of our algorithm, we will demonstrate the application of our framework in a set of experiments involving interactive trajectories for two quadcopters.

\section{Related Work}\label{sec:related}
\subsection{Interactive Trajectory Planning}

The common approach to interactive trajectory planning is for a robot to make predictions of the future trajectories of other agents and plan reactively~\cite{ziebart2009planning,schmerling2018multimodal,nishimura2020risk,vitus2013probabilistic,bai2015intention}. Planning reactively will make the agents decoupled and simplify the control problem. Nevertheless, throughout this decoupling, agents lose the capability to affect each other. To capture this, it has been shown that interactive settings can be modeled by equilibria of differential games~\cite{sadigh2016planning,sadigh2016information,turnwald2016understanding}. Several methods have been proposed for finding equilibria in interaction games. In~\cite{sadigh2016planning}, a Stackelberg equilibrium was considered where one agent is the leader, and the other agent is the follower. To find Nash equilibria of the interaction game, in~\cite{fisac2019hierarchical}, a hierarchical decomposition of the underlying game into strategic and tactical games was proposed. Iterative best response algorithms were developed for capturing interactions in racing problems and autonomous driving settings~\cite{schwarting2019social,wang2019game,spica2020real}. In~\cite{schwarting2021stochastic}, iterative dynamic programming in Gaussian belief space was used to solve for equilibria of a game-theoretic continuous POMDP.

\subsection{Approximate Solutions to Differential Games }

To find equilibria of general differential games, sequential linear-quadratic methods were proposed for two-player zero-sum differential games~\cite{mukai2000sequential,tanikawa2012local}.
To enable scalable interactive trajectory planning for a broad class of differential games, recently, a local iterative algorithm was proposed in~\cite{fridovich2020efficient} where the analytic solution to the Linear Quadratic games~\cite{bacsar1998dynamic} was exploited for approximating the equilibria of general-sum differential games. This algorithm builds upon the iterative
linear-quadratic regulator (iLQR)~\cite{li2004iterative}, and at every iteration, solves for the LQ game that results from linearizing the system dynamics and finding a quadratic approximation of the agents' cost functions. A similar iterative method was proposed in~\cite{wang2020game} for planning interactive trajectories in the presence of uncertainties where equilibria of risk-sensitive dynamic games were sought. In~\cite{cleac2019algames}, a solver was developed for interactive trajectory planning in the presence of general nonlinear state and input constraints.

\subsection{Potential Games}

The literature on potential games is mostly focused on static games. A class of static potential games with pure Nash equilibria was identified in~\cite{rosenthal1973class}.
Later, potential games were also introduced in~\cite{monderer1996potential}. Because of the appealing properties of potential games, potential games have had applications in various control and resource allocation problems~\cite{zazo2014control, zazo2015new, arslan2007autonomous,marden2009cooperative,candogan2010near,fonseca2018potential}. We argue that potential games, in the form of potential differential games, can be further utilized for trajectory planning in multi-agent settings.


\section{Problem Formulation}\label{sec:problem}
We assume that we have $N$ agents. For each agent $i$, $ 1 \leq i\leq N$, the vector $u_i(t) \in \mathbb{R}^{m_i}$ represents the control input of agent $i$ at time $t$, where $m_i$ is the dimension of the control input of agent $i$. Similarly, we let $x_i(t)\in \mathbb{R}^{n_i}$ denote the state of agent $i$ at time $t$, where $n_i$ is the dimension of the state space of agent $i$. We let $x(t)=(x_1(t),\cdots, x_N(t))$ denote the concatenated vector of all agents' states at time $t$, and $n$ be the dimension of the state vector $x(t)$. We further use $u(t)=(u_1(t),\cdots,u_N(t))$  to denote the vector of all agents' control inputs at time $t$. The overall system dynamics are
\begin{equation}\label{eq:dynamics}
    \Dot{\state}(t)=f(\state(t),\action(t),t).
\end{equation}
We consider open-loop control inputs, i.e.,\ control inputs that are only a function of the system's initial state $\state_0$ and time $t$:
\begin{equation}\label{eq:open-loop-control}
    \action_i (t)=\action_i(\state_0,t).
\end{equation}
Although the open-loop assumption may seem restrictive, in many practical applications, open-loop trajectory planning algorithms are applied in a receding-horizon fashion to approximate closed-loop feedback policies\footnote{We acknowledge that in general, receding horizon application of open-loop equilibrium strategies may not be enough for finding close-loop equilibrium policies due to the difference between the information structure of open-loop and closed-loop policies. However, for many interactive trajectory planning settings, this is a valid approximation.}. For each agent $i$, we let the set of Borel measurable functions
$\strategy_i:=\{\action_i|\;\action_i: T \rightarrow \mathbb{R}^{m_i}\}$ represent the open-loop strategy space of player $i$ that maps time to the player $i$'s control input.  Moreover, we let $\strategy_{-i}:=  \strategy_1 \times \cdots \times \strategy_{i-1} \times \strategy_{i+1} \times \cdots \times \strategy_N$ represent the open-loop strategy space of all agents except agent $i$.
We write $(u_i,u_{-i}^*)$ to denote the vector 
\begin{align}
    (u_1^*, \cdots, u_{i-1}^*,u_i,u_{i+1}^*, \cdots, u_N^*) \in U,
\end{align}
where $U$ is the strategy space of all agents.

We assume that each agent $i$ minimizes a cost function $J_i(\cdot)$, where $J_i$ depends on the initial state $x_0$ and the agents' control signals $u_1,\cdots, u_N$ through the following
\begin{equation}\label{eq:cost}
    J_i(x_0,u_1,\cdots,u_N)=\int_{0}^{\horizon}L_i(\state(t),\action(t),t) dt
    +S_i(\state(\horizon)),
\end{equation}
where $T$ is a finite time horizon, and $L_i$ and $S_i$ are the running and terminal costs of agent $i$ respectively. 



In a compact form, we describe our differential game by
\begin{align}
   \Gamma_{\state_0}^\horizon := (N,\{\strategy_i\}_{i = 1}^N,\{J_i\}_{i=1}^N,f),
\end{align}
where $x_0$ is the initial state of the system.
In a multi-agent setting, since each agent $i$ seeks to optimize their own cost $J_i(.)$, the outcome of interaction is best represented via a notion of equilibrium. Among the various notions of equilibria, we look for Nash equilibria which characterize the interaction outcome in non-cooperative multi-agent settings. 


\begin{definition}\label{def:nash} Given a differential game $\Gamma_{\state_0}^\horizon$, a control signal $u^* = (u_1^*, \cdots, u_N^*)$ is an open-loop Nash equilibrium if for every agent $i$, we have
\begin{equation}\label{eq:nash-def}
    J_i(x_0, \action^*(\cdot))\leq J_i(x_0, \action_i(\cdot),\action_{-i}^*(\cdot)).
\end{equation}
\end{definition}

Intuitively, at Nash equilibrium, no agent has any incentive for unilateral deviation from $\action_i^*(\cdot)$, i.e., when the control inputs of all the other agents $\action_{-i}^*(\cdot)$ are fixed, as~\eqref{eq:nash-def} suggests, agent $i$ will not benefit from changing its equilibrium control signal. In general, finding Nash strategies $\action^*$ satisfying~\eqref{eq:nash-def} is challenging since it involves solving $N$ coupled optimal control problems.


\section {Potential Differential Games}\label{sec:potential}
In this section, we introduce potential differential games, and then in the next section, we discuss how we can leverage potential differential games for tractable interactive trajectory planning in multi-agent settings. While finding Nash equilibria is in general challenging, there exists a class of differential games called potential differential games to which we can associate an optimal control problem (OCP) whose solutions are open-loop Nash equilibria for the original game $\Gamma_{\state_0}^\horizon$~\cite{fonseca2018potential}. 

\begin{definition}\label{def:potential_diff_game} (cf.~\cite{fonseca2018potential})
A differential game $\Gamma_{\state_0}^\horizon$ is a potential differential game if there exists an optimal control problem whose solutions are Nash equilibria of the game $\Gamma_{x_0}^T$.
\end{definition}

 This problem reduction allows us to benefit from the existing planning methods for solving single-player optimal control problems to calculate Nash equilibria in interactive settings. 
In interactive trajectory planning, agents' dynamics are normally decoupled, and each agent's state update is governed by its own control inputs and its own dynamics. The coupling between the agents occurs due to the coupling between the agents' cost functions. For example, in navigation problems, the coupling between the agents arises from the inter-agent collision avoidance costs. We leverage this property, and in the rest of this paper, assume that for each agent $i$, we have
\begin{equation}\label{eq:dyn-decoupled}
    \dot{\state}_i(t)=f_i(\state_i(t),\action_i(t),t),
\end{equation}
where $f_i$ is the dynamics of the $i_{\text{th}}$ agent.

Under decoupled dynamics~\eqref{eq:dyn-decoupled}, it has been shown in~\cite{fonseca2018potential} that a differential game is a potential differential game if the following holds.

\begin{theorem}\label{theorem:potential-definition} 
For a differential game $ \Gamma_{\state_0}^T = (N,\{\strategy_i\}_{i = 1}^N,\{J_i\}_{i=1}^N,\{f_i\}_{i=1}^N)$, if for each agent $i$, the running and terminal costs have the following structure 
\begin{equation}\label{eq:running-cost-structure}
    L_i(\state(t),\action(t),t)=
    p(\state(t),\action(t),t)+
    c_i(\state_{-i}(t),\action_{-i}(t),t),
\end{equation}
and 
\begin{equation}\label{eq:terminal-cost-strucutre}
    S_i(\state(\horizon))=\bar{s}(\state(\horizon))+
    s_i(\state_{-i}(\horizon)),
\end{equation}
then, the open-loop control input $\action^*=(\action^*_1,\cdots, \action^*_N)$ that minimizes the following optimal control problem
\begin{equation}\label{eq:OCP}
\begin{aligned}
    \min_{\action(\cdot)} \quad & \int_{0}^{\horizon}p(\state(t),\action(t),t)dt+
    \bar{s}(\state(\horizon))\\
    \textrm{s.t.} \quad & \dot{\state}_i(t)=f_i(\state_i(t),\action_i(t),t),
\end{aligned}
\end{equation}
is an open-loop Nash equilibrium of the differential game $\Gamma_{x_0}^T$, i.e.,\
$\Gamma_{\state_0}^T$ is a potential differential game.
\end{theorem}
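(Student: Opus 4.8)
The plan is to exhibit the objective of the optimal control problem~\eqref{eq:OCP} as an exact potential for the game and then run the standard potential-game argument. First I would introduce the potential functional
\[
    P(\state_0, \action) := \int_{0}^{\horizon} p(\state(t),\action(t),t)\, dt + \bar{s}(\state(\horizon)),
\]
which is precisely the cost minimized in~\eqref{eq:OCP}. Substituting the assumed cost structure~\eqref{eq:running-cost-structure} and~\eqref{eq:terminal-cost-strucutre} into the definition~\eqref{eq:cost} of $J_i$, I would decompose each agent's cost as
\[
    J_i(\state_0, \action) = P(\state_0, \action) + Q_i(\state_0, \action_{-i}),
\]
where $Q_i(\state_0, \action_{-i}) := \int_{0}^{\horizon} c_i(\state_{-i}(t),\action_{-i}(t),t)\, dt + s_i(\state_{-i}(\horizon))$ collects exactly the terms that contain neither agent $i$'s state nor its control.

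The key observation, and the step where the decoupled-dynamics assumption~\eqref{eq:dyn-decoupled} does all the work, is that $Q_i$ is independent of $\action_i$. Because each agent's state $\state_j$ evolves only under its own control via $\dot{\state}_j = f_j(\state_j, \action_j, t)$, the concatenated state $\state_{-i}$ of all other agents is fully determined by $\action_{-i}$ and the initial condition, and is therefore unaffected by any unilateral change in $\action_i$. Hence both the integrand $c_i(\state_{-i},\action_{-i},t)$ and the terminal term $s_i(\state_{-i}(\horizon))$ stay fixed when agent $i$ deviates, so $Q_i$ depends on $\action_{-i}$ alone.

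With this decomposition in hand, let $\action^*$ be a minimizer of~\eqref{eq:OCP}, i.e.,\ a global minimizer of $P(\state_0, \cdot)$. For any agent $i$ and any alternative control $\action_i$, the pair $(\action_i, \action_{-i}^*)$ is one admissible argument of $P$, so global optimality gives $P(\state_0, \action^*) \leq P(\state_0, \action_i, \action_{-i}^*)$. Adding the deviation-invariant quantity $Q_i(\state_0, \action_{-i}^*)$ to both sides and applying the decomposition yields $J_i(\state_0, \action^*) \leq J_i(\state_0, \action_i, \action_{-i}^*)$, which is exactly the Nash condition~\eqref{eq:nash-def}. Since this holds for every $i$, the minimizer $\action^*$ is an open-loop Nash equilibrium and $\Gamma_{\state_0}^\horizon$ is a potential differential game.

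I expect no serious analytical obstacle: once the cost is split, the argument is essentially algebraic. The only point requiring genuine care is the invariance of $Q_i$ under agent $i$'s deviations, which is precisely where decoupled dynamics is indispensable — were the dynamics coupled, $\state_{-i}$ could depend on $\action_i$ and the potential identity would break. A secondary subtlety worth flagging is that the construction only shows that \emph{any} minimizer of~\eqref{eq:OCP} is a Nash equilibrium, so I would add a remark noting that a Nash equilibrium is guaranteed to exist whenever the optimal control problem~\eqref{eq:OCP} admits a minimizer.
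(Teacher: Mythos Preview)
Your proof is correct and follows essentially the same argument as the paper: both use the optimality of $\action^*$ for~\eqref{eq:OCP} to obtain an inequality on the potential, then add the agent-$i$-independent term $\int_0^\horizon c_i\,dt + s_i$ (your $Q_i$) to both sides, invoking the decoupled dynamics~\eqref{eq:dyn-decoupled} to justify that this term is unaffected by $\action_i$. Your presentation is slightly more explicit in naming $P$ and $Q_i$ upfront, but the logic and the key steps are identical.
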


\begin{proof} See Appendix~\ref{sec:appendix}. \end{proof}
Note that Theorem~\ref{theorem:potential-definition} requires the running cost of every agent $i$ to be composed of a potential function $p$ and a term $c_i$ which has no dependence on the state and action of agent $i$. The potential function $p$ may depend on the entire state and input vector $\state$ and $\action$, but it is not agent-specific as it does not have any dependence on the agent's index $i$. On the other hand, the agent-specific term $c_i$ must depend only on the states and actions of all agents except agent $i$. In the next section, we will show how this decomposition can be achieved when the coupling between agents occurs due to collision avoidance cost terms. It is important to mention that in general, there are less restrictive conditions under which a differential game is a potential game, but we have only included the conditions that are relevant to interactive trajectory planning. Interested reader is referred to~\cite{fonseca2018potential} for further details.

Note that while a solution to~\eqref{eq:OCP} is always a Nash equilibrium of the original game $\Gamma_{x_0}^T$, there may exist other equilibria for the game $\Gamma_{x_0}^T$ which do not necessarily optimize~\eqref{eq:OCP}. In other words, solving optimal control~\eqref{eq:OCP} always provides a set of equilibria which is a subset of all equilibria of the game. Nevertheless, if a game is potential game, we are guaranteed that an equilibrium exists.




\section{Interactive Trajectory Planning}\label{sec:traj-planning}
 In this section, we discuss how Theorem~\ref{theorem:potential-definition} and the special structure of agents' cost functions can be leveraged for interactive trajectory planning. In multi-agent settings such as navigation, the cost function of each agent $i$ is typically composed of two types of cost terms: (i) Cost terms which are only dependent on the state and action of agent $i$ itself such as input and state tracking costs, and (ii) cost terms that capture the mutual couplings between the pairs of agents, such as collision avoidance costs, which are dependent on the state of the agent $i$ as well as other agents. For examples of these cost structures, see~\cite{sadigh2016planning,fridovich2020efficient,wang2020game,fisac2019hierarchical}. For instance, the agent-specific tracking cost can be composed of a running cost $C^{tr}_i$ and a terminal cost $C_{i,\horizon}^{tr}$ in the following form
 \begin{equation}\label{eq:tr_cost}
 \begin{aligned}
    C_{i}^{tr}(x_i,u_i) =&(\state_i-\state_{i}^\text{ref})^\intercal{Q}_i(\state_i-\state_{i}^\text{ref})+\\&
    (\action_i-\action_{i}^\text{ref})^\intercal{}{R}_i(\action_i-\action_{i}^\text{ref}),\\
    C_{i,\horizon}^{tr}(x_i,u_i)=&
    (\state_{i}(\horizon)-\state_{i}^\text{ref}(\horizon))^\intercal {Q}_i(\state_{i}(\horizon)-\state_{i}^\text{ref}(\horizon)),
 \end{aligned}
\end{equation}
where ${Q}_i$ and ${R}_i$ are weight matrices for penalizing state and control deviations from a reference trajectory $(\state_{i}^\text{ref}, \action_{i}^\text{ref})$. In a navigation setting, $\state_{i}^\text{ref}$ represents the goal state of agent $i$, and $\action_{i}^\text{ref}$ is the zero input signal for agent $i$ to minimize its control effort. Note that in general, the matrices $Q_i$ and $R_i$ can be time-variant. 

In addition to tracking cots, each agent $i$ has coupling cost terms too that create mutual impact between the agents such as avoiding collisions with other agents. For each agent $i$, we let the collision avoidance cost term $C_i^a$ be composed of pairwise collision avoidance costs $C^{a}_{ij}$ for all $j\neq i$. For each $j\neq i$, $C^{a}_{ij}$ penalizes agent $i$ for colliding with or getting close to agent $j$. We assume that pairwise collision avoidance terms $C^{a}_{ij}$ have the following structure

\begin{equation}\label{eq:ca_cost}
    C_{ij}^{a}(x_i,x_j)=\alpha_{ij}\left(d(\state_i,\state_j)\right)
\end{equation}
where $\alpha_{ij}$ is a function of the distance $d$ between the agents. Note that $d$ depends only on the states of agent $i$ and $j$. 
Hence, for agent $i$, the running and terminal costs become 
\begin{equation}\label{eq:running-cost-revised}
    L_i (\state, \action_i)
    =C_{i}^{tr}(\state_i,\action_i)+\sum_{j\neq i}^{N}C_{ij}^a(\state_i,\state_j)
\end{equation}
and
\begin{equation}\label{eq:terminal-cost-revised}
        S_i(\state_i(T))=C_{i,\horizon}^{tr}(x_{i}(T)).
\end{equation}
The above cost structures can be more general. For example, $C_i^\text{tr}$ can encode any other objective that agent $i$ cares about such as minimum time to reach, minimum fuel, etc. Similarly, the inter-agent coupling terms can be more complicated than collision avoidance costs. But for simplicity, in the presentation of the paper, we specifically consider tracking and collision avoidance costs.
Our key insight is that if the inter-agent collision avoidance costs~\eqref{eq:ca_cost} are symmetric for any two agents $i$ and $j$, i.e., $C_{ij}^{a}(x_i,x_j)=C_{ji}^{a}(x_j,x_i)$ for all pair of agents; then, the game $\Gamma_{\state_0}^T$ is a potential differential game. In other words, if any two agents penalize for collisions with each other similarly, the game is a potential game. Note that this does not imply that all agents penalize collisions similarly. We only need any two agents to penalize for getting close to each other similarly. In other words, the agents' sensitivity to collisions with each other should be symmetric. 


\begin{figure}
\centering
\includegraphics[width=0.3\textwidth]{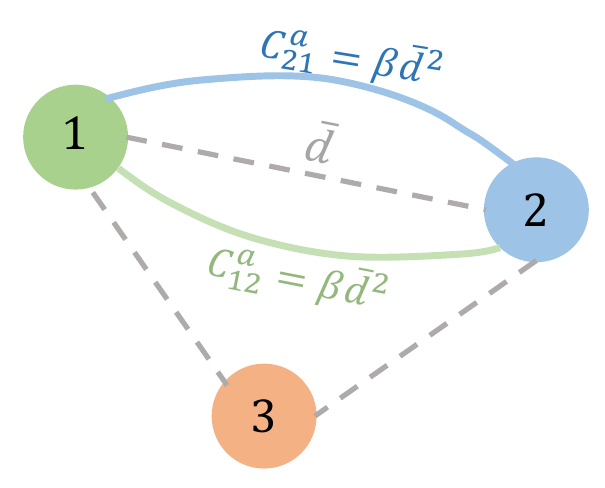}
\caption{Agent 1 penalizes for collision with agent 2 the same way that agent 2 penalizes for collision with agent 1. For example, if collision avoidance costs are in the form of the quadratic of the distance between the two agents, we must have $C_{12}^a=C_{21}^a=\beta \bar{d}^2$ where $\bar{d}$ is the distance between agents 1 and 2 and $\beta$ is the weight for avoiding collisions. Agents 1 and 2 are symmetric in how they penalize for collisions. If collision avoidance costs are symmetric for all pair of agents, then the underlying game is a potential differential game.}
\label{fig:three-agent-example}
\end{figure}

We first show this through an example. Consider a differential game with three agents (see Fig.~\ref{fig:three-agent-example}), and cost structure~\eqref{eq:running-cost-revised} and~\eqref{eq:terminal-cost-revised}. We show that the game is potential if inter-agent cost terms are symmetric. We let $p$ and $\bar{s}$ in Theorem~\ref{theorem:potential-definition} be
\begin{equation}\label{eq:3-agent-p}
\begin{aligned}
        p(\state,\action) &=C_1^{tr}(\state_1,\action_1)+C_2^{tr}(\state_2,\action_2)+C_3^{tr}(\state_3,\action_3)\\
        &+C^a_{12}(x_1,\state_2)+C^a_{13}(\state_1,\state_3)+C^a_{23}(\state_2,\state_3),
\end{aligned}
\end{equation}
\begin{equation}\label{3-agent-s-bar}
    \begin{aligned}
     \bar{s}(\state,T)&=C_{1,\horizon}^{tr} (\state_1(T))+C_{2,\horizon}^{tr}(\state_2(T))+C_{3,\horizon}^{tr}(\state_3(T)).
    \end{aligned}
\end{equation}
For each agent $i$, we define the term $c_i$ in~\eqref{eq:running-cost-structure} to be
\begin{equation}
\begin{aligned}
        &c_1(\state_2,\state_3,\action_2,\action_3)=\\
        &-C^{tr}_2(\state_2,\action_2)-C^{tr}_3(\state_3,\action_3)-C^a_{23}(\state_2,\state_3),\\ 
        &c_2(\state_1,\state_3,\action_1,\action_3)=\\
        &-C^{tr}_1(\state_1,\action_1)-C^{tr}_3(\state_3,\action_3)-C^a_{13}(\state_1,\state_3),\\
        &c_3(\state_1,\state_2,\action_1,\action_2)=\\
        &-C_1^{tr}(\state_1,\action_1)-C^{tr}_2(\state_2,\action_2)-C^a_{12}(\state_1,\state_2).
\end{aligned}
\end{equation}
Likewise, for each agent $i$, we define the $s_i$ term in~\eqref{eq:terminal-cost-strucutre} to be:
\begin{equation}
    \begin{aligned}
            s_1(x_2(T),x_3(T))&=-C_{2,\horizon}^{tr}(x_2(T))-C_{3,\horizon}^{tr}(x_3(T)),\\
            s_2(x_1(T),x_3(T))&=-C_{1,\horizon}^{tr}(x_1(T))-C_{3,\horizon}^{tr}(x_3(T)),\\
            s_3(x_1(T),x_2(T))&=-C_{1,\horizon}^{tr}(x_1(T))-C_{2,\horizon}^{tr}(x_2(T)).
    \end{aligned}
\end{equation}
The potential function~\eqref{eq:3-agent-p} consists of the sum of the running costs of all agents and the sum of all pair-wise collision avoidance costs of agents with unordered pairs of distinct indices $\{i,j\}$. Similarly, $\bar{s}$ is the sum of the terminal costs of all agents. It is easy to show that if $C^a_{ij}(x_i,x_j)=C^a_{ji}(x_j,x_i)$ for any two agents $i$ and $j$, we have:

\begin{align}
     L_i(\state(t),\action(t),t) &=
    p(\state,\action,t)+
    c_i(\state_{-i},\action_{-i},t), \quad 1 \leq i \leq 3, \\
    S_i(\state(\horizon))&=\bar{s}(\state(\horizon))+
    s_i(\state_{-i}(\horizon)),  \quad 1 \leq i \leq 3.
\end{align}
Thus, using Theorem~\ref{theorem:potential-definition}, our game is a potential game. We can generalize this intuition through the following theorem:




\begin{figure*}[h!]
\includegraphics[width=\textwidth]{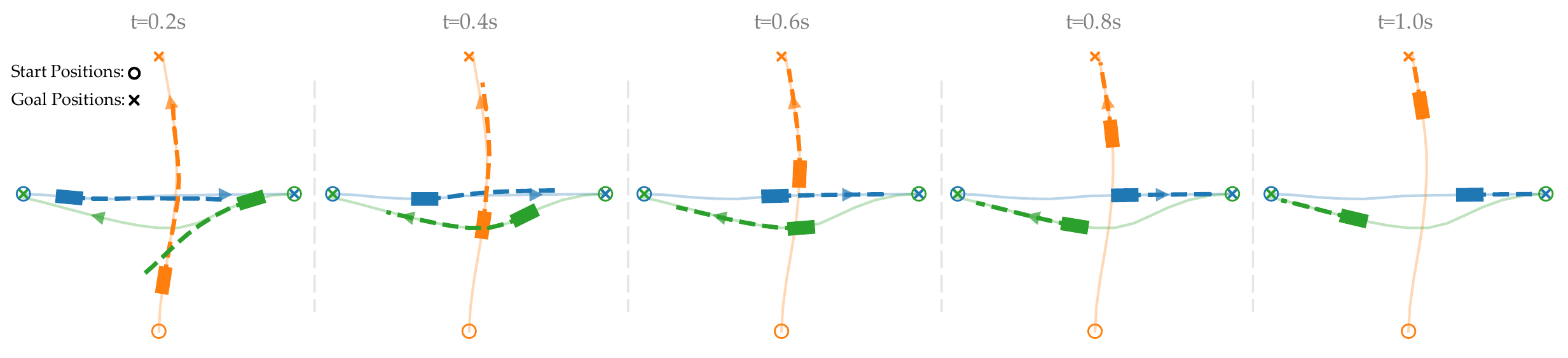}
\caption{The snapshots of the trajectories found by our algorithm for an intersection scenario. Agents move from their start positions (denoted by circles) and move towards their goal positions (denoted by cross signs). The agents manage to successfully avoid collisions and resolve conflicts when they get close to each other. Dashed lines represent the planned trajectories over the receding horizon while the solid lines represent the final resulting trajectories}
\label{fig:3_agents_mpc}
\end{figure*}

\begin{theorem}\label{theorem:interaction-potential}
Under dynamics~\eqref{eq:dyn-decoupled}, and cost structures~\eqref{eq:running-cost-revised} and~\eqref{eq:terminal-cost-revised}, if for any two agents $i$ and $j$, we have
\begin{align}\label{eq:collision-conditions}
    C_{ij}^{a}(\state_i,\state_j) = C_{ji}^{a}(\state_j,\state_i),
\end{align}
the underlying differential game $\Gamma_{\state_0}^T$ is a potential differential game with the following potential functions $p$ and~$\Bar{s}$
\begin{equation}\label{eq:interaction-potential}
\begin{aligned}
    p(\state,\action,t) &= \sum_{i=1}^N{C_i^{tr}} (\state_i,\action_i) + 
    \sum_{1\leq i < j} {C_{ij}^a}(\state_i,\state_j),\\
    \Bar{s}&=\sum_i C_{i,\horizon}^{tr}(x_i),
\end{aligned}
\end{equation}
where the second summation in $p$ is over all unordered pairs of distinct agents' indices.
\end{theorem}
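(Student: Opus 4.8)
The plan is to reduce Theorem~\ref{theorem:interaction-potential} to Theorem~\ref{theorem:potential-definition}. Theorem~\ref{theorem:potential-definition} already supplies the implication ``correct cost decomposition $\Rightarrow$ potential game,'' so all that remains is to exhibit, for the proposed $p$ and $\bar{s}$ of~\eqref{eq:interaction-potential}, the agent-specific remainders $c_i$ and $s_i$ demanded by~\eqref{eq:running-cost-structure} and~\eqref{eq:terminal-cost-strucutre}, and to verify that each $c_i$ depends only on $(\state_{-i},\action_{-i})$ and each $s_i$ only on $\state_{-i}(\horizon)$. Concretely, I would \emph{define} $c_i := L_i - p$ and $s_i := S_i - \bar{s}$ (so that the additive structure holds by construction) and then show that every term carrying agent $i$'s own state or input cancels, so that the residuals have the claimed dependence.

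First I would dispatch the tracking and terminal terms, which require no hypothesis beyond the cost forms~\eqref{eq:running-cost-revised} and~\eqref{eq:terminal-cost-revised}. Subtracting $p$ from $L_i$, the tracking contribution is $C_i^{tr}(\state_i,\action_i) - \sum_{k=1}^N C_k^{tr}(\state_k,\action_k) = -\sum_{k\neq i} C_k^{tr}(\state_k,\action_k)$, which is manifestly independent of $(\state_i,\action_i)$. The same computation on the terminal costs gives $s_i = -\sum_{k\neq i} C_{k,\horizon}^{tr}(\state_k(\horizon))$, depending only on $\state_{-i}(\horizon)$, exactly as required.

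The main obstacle---and the only place the symmetry hypothesis~\eqref{eq:collision-conditions} enters---is the bookkeeping of the pairwise collision terms. The collision part of $p$ is a sum over \emph{unordered} pairs $1\le a<b$, so the terms involving agent $i$ are precisely the pairs $\{i,j\}$ for $j\neq i$, each appearing once, written as $C_{ij}^a(\state_i,\state_j)$ if $i<j$ and as $C_{ji}^a(\state_j,\state_i)$ if $j<i$. Invoking $C_{ij}^a(\state_i,\state_j)=C_{ji}^a(\state_j,\state_i)$, every such term equals $C_{ij}^a(\state_i,\state_j)$, so the total collision contribution of agent $i$ to $p$ is $\sum_{j\neq i} C_{ij}^a(\state_i,\state_j)$---identical to the collision term in $L_i$. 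In $c_i = L_i - p$ these cancel, leaving only $-\sum_{a<b,\,a,b\neq i} C_{ab}^a(\state_a,\state_b)$, which is independent of $\state_i$. Hence $c_i = -\sum_{k\neq i} C_k^{tr}(\state_k,\action_k) - \sum_{a<b,\,a,b\neq i} C_{ab}^a(\state_a,\state_b)$ depends only on $(\state_{-i},\action_{-i})$.

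Having produced valid $c_i$ and $s_i$, the running and terminal costs meet the hypotheses~\eqref{eq:running-cost-structure}--\eqref{eq:terminal-cost-strucutre}, and Theorem~\ref{theorem:potential-definition} immediately certifies that $\Gamma_{\state_0}^\horizon$ is a potential differential game with potentials $p$ and $\bar{s}$; the three-agent display preceding the statement is exactly the $N=3$ instance of this argument, which I would cite as a sanity check on the signs. I expect no analytic difficulty: the entire content is the combinatorial observation that symmetry lets each cross term be charged to the single unordered pair it belongs to, so the index-chasing is the only step that must be carried out with care.
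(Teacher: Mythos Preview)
Your proposal is correct and matches the paper's own proof: define the same $p,\bar s$, exhibit $c_i=-\sum_{k\neq i}C_k^{tr}-\sum_{a<b,\,a,b\neq i}C_{ab}^a$ and $s_i=-\sum_{k\neq i}C_{k,\horizon}^{tr}$, verify the decomposition (using symmetry for the cross terms), and invoke Theorem~\ref{theorem:potential-definition}. If anything, you are more explicit than the paper about exactly where~\eqref{eq:collision-conditions} is used, since the paper simply states the $c_i,s_i$ and says ``it is easy to verify'' the identities.
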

\begin{proof}
We prove this by showing that under assumption~\eqref{eq:collision-conditions}, cost structures~\eqref{eq:running-cost-revised} and~\eqref{eq:terminal-cost-revised} satisfy the conditions~\eqref{eq:running-cost-structure} and~\eqref{eq:terminal-cost-strucutre} in Theorem~\ref{theorem:potential-definition}. Let $p$, and $\bar{s}$
be defined as introduced in~\eqref{eq:interaction-potential}. For each agent $i$, we define the agent-specific term $c_i$ in~\eqref{eq:running-cost-structure} to be
\begin{align}
    c_i(\state_{-i},\action_{-i},t)&= 
    -\sum_{j\neq i}C_{j}^{tr}(\state_{j},\action_{j})    
    -\!\!\!\!\sum_{\substack{1\leq j < k \\ j\neq i,\;k\neq i}}{C_{jk}^a}(\state_j,\state_k), \label{eq:agent-specific-running}\\
     s_i(\state_{-i}(\horizon))&=-\sum_{j\neq i}C_{j,\horizon}^{tr}(x_j), \label{eq:agent-specific-terminal}
\end{align}
where the first term in $c_i$ is the negative of the sum of the tracking costs of all agents except for agent $i$, and the second term is the sum of all pairwise inter-agent costs for the unordered pairs of agents $\{j\neq i,k\neq i\}$. Similarly, $s_i$ is the summation over all agents' terminal costs except for the agent $i$'s terminal cost. Note that in the above, $c_i$ and $s_i$ are only a function of the states and inputs of agents other than agent $i$. Using the potential function~\eqref{eq:interaction-potential} and the cost functions~\eqref{eq:agent-specific-running} and~\eqref{eq:agent-specific-terminal}, it is easy to verify that for each agent $i$, we have
\begin{align}
    L_i (\state, \action_i)
    &=C_{i}^{tr}(\state_i,\action_i)+\sum_{j\neq i}^{N}C_{ij}^a(\state_i,\state_j) \\
    &= p(x,u,t) + c_i(x_{-i},u_{-i},t),
\end{align}
and further 
\begin{align}
    S_i(\state(\horizon))=\bar{s}(\state(\horizon))+
    s_i(\state_{-i}(\horizon)).
\end{align}
Hence, using Theorem~\ref{theorem:potential-definition}, our game is a potential differential game. 
\end{proof}






Using Theorem \ref{theorem:interaction-potential}, under assumption~\eqref{eq:collision-conditions}, for finding equilibria of the game $\Gamma_{x_0}^\horizon$, instead of solving a set of coupled optimal control problems~\eqref{def:nash},
we can simply solve the optimal control problem~\eqref{eq:OCP}. Consequently, we propose to solve the following optimal control problem in a receding horizon fashion:
\begin{equation}\label{eq:OCP_discrete}
\begin{split}
    \min_{\action(\cdot)} \quad  
    &\int_{0}^{T}p(\state(t),\action(t),t)dt + \Bar{s}(\state(\horizon))\\
    \quad\textrm{s.t.} \quad 
    \Dot{\state}(t)&=f(\state(t),\action(t),t),\\
    x(0) &= x_0.
\end{split}
\end{equation}
where the potential functions $p$ and $\Bar{s}$ are as defined in~\eqref{eq:interaction-potential}. The significance of this result is that now we can use standard single-agent trajectory optimization algorithms to solve~\eqref{eq:OCP_discrete}. For robots with general nonlinear dynamics, we can utilize any nonlinear trajectory optimization algorithm. We use iterative Linear Quadratic Regulator (iLQR) derived in~\cite{li2004iterative,tassa2012synthesis} for solving~\eqref{eq:OCP_discrete}. We choose iLQR because of its success across different robotic applications~\cite{koenemann2015whole,kitaev2015physics,chen2017constrained}.

\begin{algorithm}
    \caption{Potential iLQR}
    \label{alg:ilqr}
    \begin{algorithmic}[1]
        \State \textbf{Inputs}
        \State system dynamics \eqref{eq:dyn-decoupled}, potential functions~\eqref{eq:interaction-potential}
        \State \textbf{Initialization}
        \State initialize the control input using $u_i(.)= 0, 1 \leq i \leq N$
        \State forward simulate~\eqref{eq:dyn-decoupled} to obtain nominal trajectories $\eta=\{\bar{x}(t),\bar{u}_1(t),\cdots,\bar{u}_N(t)\}_{t\in [0,T]}$
        \While{not converged}
            \State linear approximation of \eqref{eq:dyn-decoupled} around $\eta$
            \State quadratic approximation of~\eqref{eq:interaction-potential} around $\eta$
            \State solve the backward recursion through Ricatti equation and obtain new control policies.
            \State forward simulate the controls and obtain the new nominal trajectories $\eta$.
         \EndWhile
        \State \Return control input $u_i(.)$ for every agent $i$.
    \end{algorithmic}
\end{algorithm}

We start with initializing our controller. Then, starting from an initial condition, we integrate the system dynamics~\eqref{eq:dyn-decoupled} forward in time to obtain a nominal trajectory $\eta=\{\bar{x}(t),\bar{u}_1(t),\cdots,\bar{u}_n(t)\}_{t\in [0,T]}$. We linearize the the system dynamics~\eqref{eq:dynamics} around $\eta$ and further compute a quadratic approximation of the potential function in~\eqref{eq:OCP_discrete} around $\eta$. We use Ricatti equation to solve the resulting approximate Linear Quadratic Regulator problem to obtain a new nominal trajectory and repeat this process until convergence. The outline of our interactive trajectory planning algorithm is summarized in Algorithm~\ref{alg:ilqr}.

In the next two sections, we demonstrate the success of our approach in generating intuitive interactive trajectories in both simulations and experiments.


\section{Simulation Studies}\label{sec:sim}
In this section, we demonstrate the performance of our algorithm in a planar navigation setting involving three agents at an intersection where each agent wants to reach its goal while avoiding collisions with other agents (See Fig.~\ref{fig:3_agents_mpc}).



For each agent $i$, we use the following unicycle dynamics to model our vehicle dynamics:
\begin{equation}
\begin{alignedat}{3}
\Dot{p}_{x,i}&=v_i\cos{\theta_i}, \quad&&\Dot{p}_{y,i}&&=v_i\sin{\theta_i},\\
\Dot{\theta}_i&=\omega_i,        &&\;\;\;\Dot{v}_i&&=a_i,\\
\end{alignedat}
\end{equation}
where $p_{i,x}$ and $p_{i,y}$ are the $x$ and $y$ coordinates of the position of agent $i$ in the 2D plane, $v_i$ is the forward velocity for agent $i$, and $\theta_i$ is the heading of vehicle $i$. For each agent $i$, the state vector is $\state_i=[p_{x,i},p_{x,i},\theta_i,v_i]$, and the input vector is
$\action_i=[\omega_i,a_i]$.
We assume that each agent has a tracking cost $C_i^{\text{tr}}$ in the form of~\eqref{eq:tr_cost} where we let the $Q$ and $R$ matrices be diagonal matrices with different weights on each diagonal entry, and each diagonal entry in Q and R matrices acts as a scaling weight for penalizing the corresponding state or control input. Following~\cite{fridovich2020efficient}, we choose the inter-agent costs $C_{ij}^{a}$ to be the following for any two agents $i$ and $j$:

\begin{equation}\label{collision_unicycle}
C^a_{ij}(\state_i,\state_j)=
    \begin{cases}
    (d_{ij}-d_{\text{prox}})^2 & \text{if }d_{ij}<d_{\text{prox}}\\
    0 & \text{else }
    \end{cases}
\end{equation}
where $d_{ij}$ is the distance between vehicles $i$ and $j$, and $d_{\text{prox}}$ is the threshold distance above which no collision cost is incurred. We keep $d_{\text{prox}}$ to be 2.4 meters in our simulations. 

We run Algorithm~\ref{alg:ilqr} in a receding horizon fashion.
Although we presented our algorithm in continuous time, we solved the backward and forward recursions of Algorithm~\ref{alg:ilqr} in discrete time for our implementation. We set our discrete time intervals to be $0.1$ second. We further let the planning horizon for the receding horizon controller be $1$ second. Fig.~\ref{fig:3_agents_mpc} illustrates the trajectories found by our planning algorithm. 

\begin{figure}
\includegraphics[width=\linewidth]{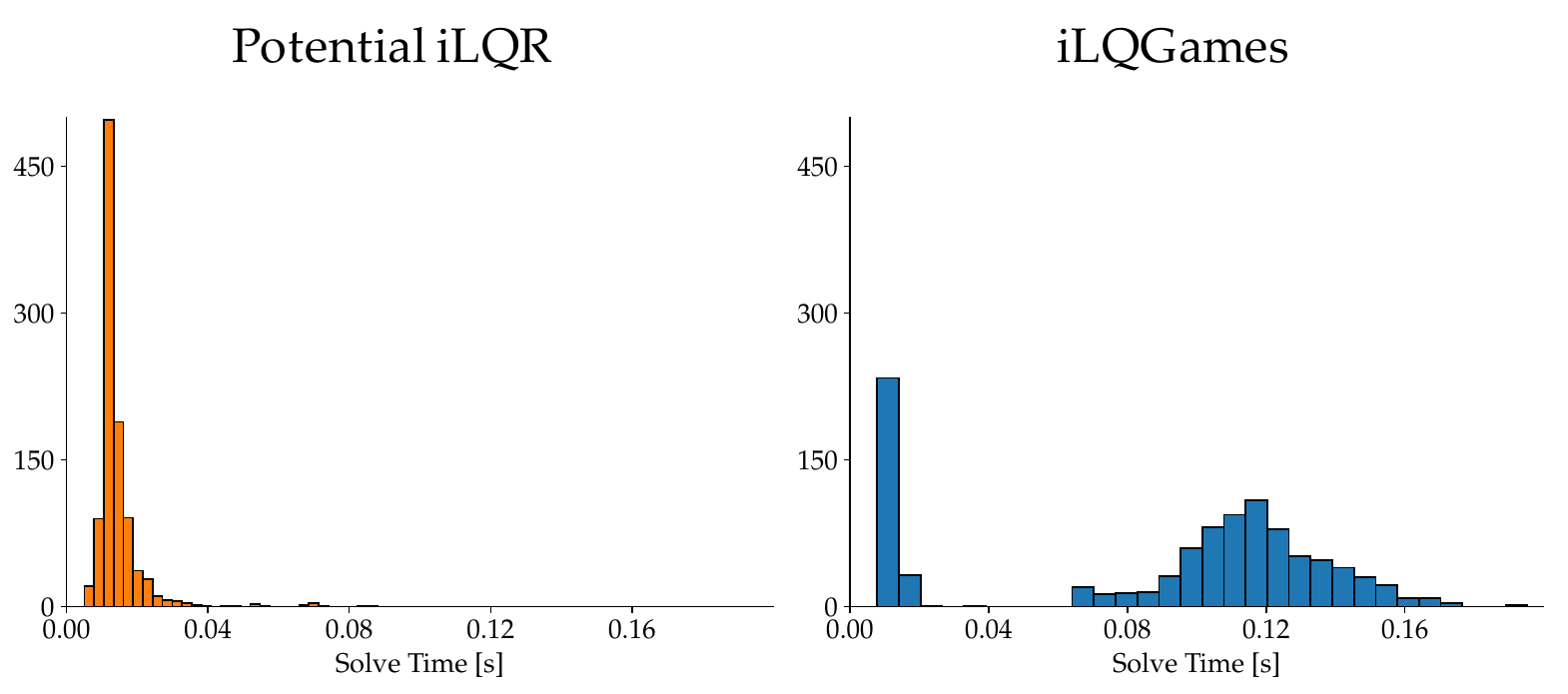}
\caption{Histogram data of our Monte Carlo study with 1000 random initial conditions. Potential-iLQR yields an average solution time of $14$ ms with a standard deviation of $8$ ms. For iLQGames, the average solution time is $89$ ms with a standard deviation of $50$ ms. Our approach is more than 6 times faster than iLQGames.
}
\label{fig:histogram}
\end{figure}

We further compare the performance of our algorithm with the game solver iLQGames~\cite{fridovich2020efficient} by running a Monte Carlo study for the intersection problem. We consider 1000 random initial conditions and evaluate the solution times for both algorithms. We solve for trajectories with a $5$-second prediction horizon. Fig.~\ref{fig:histogram} shows the histograms of the results of our Monte Carlo study. For 1000 samples, the average convergence time of our algorithm is $14$ ms with a standard deviation of $8$ ms, whereas iLQGames has the average convergence time of $89$ ms with a standard deviation of $50$ ms. We also expect our algorithm to be more scalable in the number of agents. Since we solve a single optimal control problem using iLQR, we inherit its $\mathcal{O}(n^3)$ complexity that only scales with the dimension of the state space whereas the iLQGames has $\mathcal{O}(N^3 n^3)$ complexity that scales with both the state-space dimension and the number of agents.

\section{Experiments}\label{sec:exp}
To demonstrate the real-time capabilities of our framework, we set up an experiment in hardware on two Crazyflie 2.0 quadcopters 
within the Robot Operating System (ROS) framework~\cite{quigley2009ros}. 
To obtain the state information, we use a Vicon motion capture system. To send waypoint commands to the quadcopters within the ROS environment, we use the Crazyswarm repository~\cite{preiss2017crazyswarm}. 

\begin{figure*}[h!]
\includegraphics[width=\textwidth]{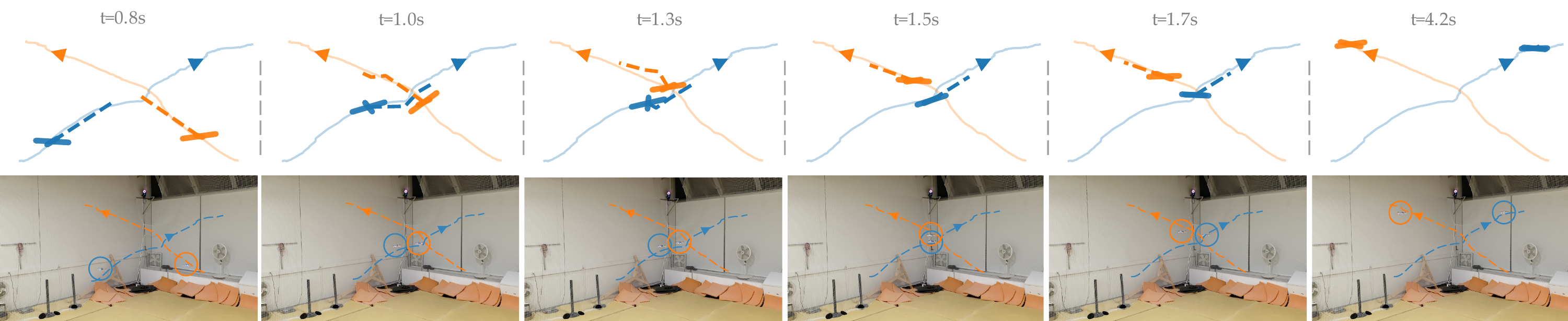}
\caption{The snapshots of the interactive trajectories found by our algorithm for two quadcopters. The start and goal positions of the quadcopters are set such that they need to avoid collisions while traversing their trajectories in 3D. Our algorithm generates intuitive trajectories where quadcopters manage to change altitudes for avoiding collisions. The plots in the first row were generated using the real-time data collected during the experiment. The second row includes the matching video frames of the same experiment. Each column represents the same time instance.}
\label{fig:drones_diagonal}
\end{figure*}

We use a 6D kinematic quadrotor model for quadcopters (see~\cite{sabatino2015quadrotor} for further information). Our state state vector is
\begin{equation*}
    \state_i=[p_{x,i},p_{y,i},p_{z,i},\phi_i,\theta_i,\psi_i],
\end{equation*}
where $p_{x,i},p_{y,i},p_{z,i}$ represent the position of the quadcopter body frame origin and $\phi_i,\theta_i,\psi_i$ represent the roll, pitch, and yaw angles describing the orientation of the body frame of the quadcopter with respect to the inertial frame. For each quadcopter, the control inputs are
\begin{equation*}
    \action_i=[v^b_{x,i},v^b_{y,i},v^b_{z,i},p_i,q_i,r_i],
\end{equation*}
where $v^b_{x,i}$, $v^b_{y,i}$, and $v^b_{z,i}$ are the translational velocities expressed in the body frame. Similarly, $p_i$, $q_i$, and $r_i$ represent the components of the angular velocities expressed in the body frame. 

We run experiments with 2 different scenarios. In the first scenario, the two quadcopters start at the same height and switch their positions such that the starting position of one quadcopter becomes the goal position of the other (see Fig.~\ref{fig:vehicles_front}). We set the xyz coordinate of the start position of one of the quadcopters to be $(0,1,2)$ while the start position of the other quadcopter is  $(1.5,0,2)$. To reach their goal positions, quadcopters need to avoid collisions and coordinate their motion. In the second scenario, the start position of the quadcopters is similar to the first scenario, but their goal positions are in different altitudes. The quadcopters need to traverse three-dimensional trajectories to reach their goals (see Fig.~\ref{fig:drones_diagonal}). In this experiment, one quadcopter starts at $(0,1,1.5)$ and reaches $(1.5,0,2.5)$ while the other quadcopter starts at $(1.5,0,1.5)$ and reaches $(0,1,2.5)$. We let our time step be 0.2 second and set the horizon length to be 1s for our receding horizon planner. 
With this setting, we are able to generate waypoints with 20 Hz update rate. 


Fig.~\ref{fig:vehicles_front} demonstrates the trajectories of the quadcopters in the first scenario, and Fig.~\ref{fig:drones_diagonal} shows the trajectories in the second scenario. As the figures illustrate, the quadcopters successfully avoid collisions and exhibit intuitive trajectories in three-dimensional space. In particular, as Fig.~\ref{fig:vehicles_front} and Fig.~\ref{fig:drones_diagonal} show, the quadcopters change altitudes to avoid collisions with each other. These are indeed an extension of the planar interactive trajectories to the three-dimensional space where the agents can also change altitudes to avoid collisions. Note that both quadcopters maintain their nominal trajectories initially in both scenarios. However, as they get closer than the distance $d_{\text{prox}}$, the coupling between the cost functions becomes effective and the algorithm generates collision-free trajectories.

\section{Conclusion} 
\label{sec:conclusion}

\noindent \textbf{Summary.} We showed that interactive trajectories can be found by solving a single optimal control problem instead of solving a set of coupled optimal control problems. In particular, for a class of multi-agent settings where agents have symmetric mutual cost couplings, we proved that the differential game underlying the interaction is a potential differential game whose equilibrium can be found by solving a single optimal control problem. 
We further showcased the applicability of our framework on a set of simulations and experiments in hardware.

\noindent \textbf{Limitations and Future Work.} 
The idea of reducing equilibria finding to solving a single-agent optimal control problem is achieved under symmetric pairwise couplings. However, it is unclear whether interaction remains a potential game under general asymmetric pairwise couplings. We plan to investigate this further. We believe that for the asymmetric settings, the current algorithm still provides a very fast warm-starting method which provides a feasible relevant initial trajectory. We expect this to address one of the challenges in interactive trajectory planning using game solvers which are sensitive to proper initialization of trajectories.

Our proposed algorithm is a centralized planning algorithm. Ideally, in a game-theoretic setting such as interactions, we want the agents to be the decision makers. Thus, ultimately, we would like to achieve decentralized interactive planning. We expect that our proposed reduction to a single-agent optimal control problem enables us to address this problem by investigating the applications of decentralized control algorithms for minimizing the potential function of the interaction game.

\appendices
\section{Proof of Theorem~\ref{theorem:potential-definition}}\label{sec:appendix}

This theorem was originally proved in~\cite{fonseca2018potential}. For completeness, we are including our proof here. Let $\action^*$ be the solution to \eqref{eq:OCP} and $\state^*$ be the corresponding optimal trajectory of the system state. Fix an agent $i$. Let $\action_i\neq \action^*_i$ be an open-loop control signal
for agent $i$. Let $\state$ be the state trajectory that corresponds to the control signals $(\action_i,\action^*_{-i})$ in the original differential game. Since the state-input trajectory $\state^*$ and $\action^*$ are optimal for \eqref{eq:OCP}, we have
\begin{equation}
\begin{aligned}
    \int_{0}^{\horizon}\;&p\left(\state^*(t),
    \action^*(t),t\right)dt+
    \bar{s}\left(\state^*(\horizon)\right)\\
    &\leq
    \int_{0}^{\horizon}p\left(\state(t),
    \left(\action_i(t),\action^*_{-i}(t)\right),t\right)dt+
    \Bar{s}\left(\state(\horizon)\right).
\end{aligned}
\end{equation}
If we add 
\begin{equation}\label{eq:constant_term}
    \int_{0}^{\horizon}c_i\left(\state_{-i}(t),
    \action^*_{-i}(t),t\right)dt
    +s_i(\state_{-i}(\horizon)),
\end{equation}
to both sides, we have
\begin{equation}\label{eq:nash-definition-proved}
    J_i^*(x_0,\action^*)\leq J_i(x_0,\action_i,\action_{-i}^*).
\end{equation}
It is important to note that because the dynamics~\eqref{eq:dyn-decoupled} are decoupled, $\state_{-i}$ in \eqref{eq:constant_term} depends only on $\action_{-i}^*$. Once $\action_{-i}^*$ is fixed, \eqref{eq:constant_term} becomes a constant term added to both sides. Equation~\eqref{eq:nash-definition-proved} holds for any agent $i$ which is the definition of~\eqref{eq:nash-def} which proves our theorem.

\bibliographystyle{IEEEtran}
\bibliography{references}

\end{document}